\newcommand*{\addFileDependency}[1]{
\typeout{(#1)}
%
%
\@addtofilelist{#1}
%
\IfFileExists{#1}{}{\typeout{No file #1.}}
}\makeatother
\newcommand*{\myexternaldocument}[1]{%
\externaldocument{#1}%
\addFileDependency{#1.tex}%
\addFileDependency{#1.aux}%
}
\newcommand*\diff{\mathop{}\!\mathrm{d}}
\newcommand{\repourl}{\url{https://anonymous.4open.science/r/grind-FFCB}}
\begin{document}
\title{GrINd: Grid Interpolation Network for Scattered Observations}
%
%
\author{Andrzej Dulny\and
Paul Heinisch\and
Andreas Hotho\and
Anna Krause}
%
\authorrunning{Dulny et al.}
%
\institute{CAIDAS, University Würzburg, Germany\\
\email{andrzej.dulny@uni-wuerzburg.de}\\
\email{paul.heinisch@stud-mail.uni-wuerzburg.de}\\
\email{hotho@informatik.uni-wuerzburg.de}\\
\email{anna.krause@uni-wuerzburg.de}
}

\maketitle              
\begin{abstract}
Predicting the evolution of spatiotemporal physical systems from sparse and scattered observational data poses a significant challenge in various scientific domains. Traditional methods rely on dense grid-structured data, limiting their applicability in scenarios with sparse observations. To address this challenge, we introduce GrINd (Grid Interpolation Network for Scattered Observations), a novel network architecture that leverages the high-performance of grid-based models by mapping scattered observations onto a high-resolution grid using a Fourier Interpolation Layer. In the high-resolution space, a NeuralPDE-class model predicts the system's state at future timepoints using differentiable ODE solvers and fully convolutional neural networks parametrizing the system's dynamics. We empirically evaluate GrINd on the DynaBench benchmark dataset, comprising six different physical systems observed at scattered locations, demonstrating its state-of-the-art performance compared to existing models. GrINd offers a promising approach for forecasting physical systems from sparse, scattered observational data, extending the applicability of deep learning methods to real-world scenarios with limited data availability.

\keywords{Physics \and Dynamical Systems \and Fourier \and NeuralPDE \and DynaBench}
\end{abstract}
\begin{figure}[ht]
\includegraphics[width=1\textwidth]{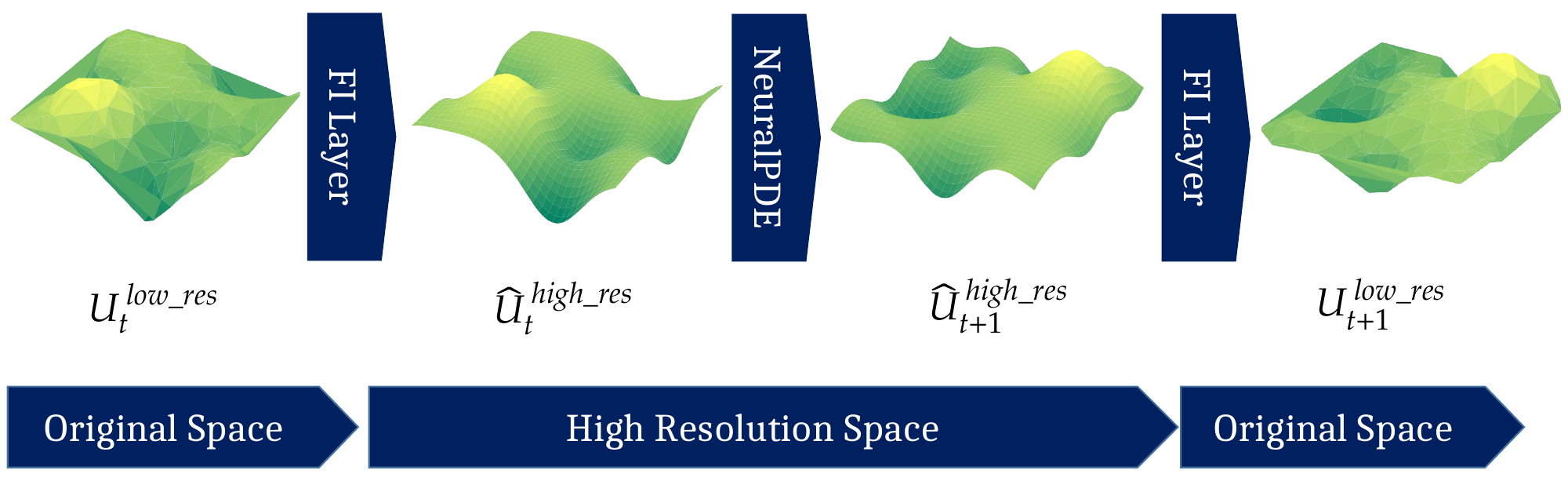}
\caption{Summary of our approach. The low-resolution observations are first mapped onto a high-resolution grid using a Fourier Interpolation Layer. In this high resolution space a predictive model (NeuralPDE) forecasts the evolution of the system which is then mapped back to the original observation space.}
\label{fig:figure1}
\end{figure}
\section{Introduction}
Understanding and accurately predicting the evolution of spatiotemporal physical systems is an important challenge in various scientific domains, encompassing fields such as climate modeling, weather forecasting, computational fluid simulation, biology, and many more~\cite{Kleinstreuer2010,Cullen1997,Bauer2015,Boussard2023}.
These systems are typically guided by physical laws summarized as partial differential equations, which describe the rate of change of the systems' quantities as a function of their partial derivatives in space~\cite{Cannon1967}.
Traditionally, numerical simulations have served as the cornerstone of predictive modeling for these systems, with various methods such as the method of lines, spectral methods, finite volume methods, and the finite element method designed to tackle specific challenges and specific classes of equations~\cite{larsson2003,Schiesser1991}.

In recent years, a notable paradigm shift has emerged, with a range of new deep learning-based methods being proposed to replace or augment the classical numerical methods.
Models like Physics Informed Neural Networks~\cite{Raissi2017a,Raissi2017b} can be used to solve differential equations by employing neural networks as solution approximators.
Other approaches like SINDy~\cite{Brunten2016} or PDE-Net~\cite{Long2019} solve the task of reconstructing the equation that describes the evolution of the system by using symbolic regression and sparsity enforcement.
Yet another range of methods focus solely on forecasting the evolution of the system. 
Models like PanguWeather~\cite{Bi2023}, FourCastNet~\cite{Kurth2022} or GraphCast~\cite{Lam2023} leverage large-scale neural architectures to learn the behavior of the system directly from data without any physical information.

Crucially, the performance of these types of models depends on the availability of large amounts of high-quality data that in most cases is assumed to be distributed across a spatial grid.
In fact most methods proposed for forecasting physical systems work only on grid structured data~\cite{Dulny2023} and a persistent challenge arises in scenarios where observational data is inherently sparse and spatially scattered, a common occurrence in many real-world applications.
As Dulny et al.~\cite{Dulny2023} pointed out, the task of predicting the evolution of a physical system based only on low-resolution scattered observations remains a challenging and unsolved task.

To tackle this challenge, we introduce GrINd (\textbf{Gr}id \textbf{I}nterpolation \textbf{N}etwork for Scattere\textbf{d} Observations), an abstract network architecture which leverages the high performance of grid-based models, by mapping the scattered observations onto a high-resolution grid using a Fourier Interpolation Layer.
In the high-resolution space a NeuralPDE-class model~\cite{Dulny2022} predicts the state of the system at future timepoints using a combination of differentiable ODE solvers~\cite{Chen2018} and the method of lines~\cite{Schiesser1991} with fully convolutional neural networks parametrizing the dynamics of the system.
Our approach is summarized in~\Cref{fig:figure1}.

We empirically evaluate the performance of our model on the DynaBench benchmark dataset~\cite{Dulny2023} encompassing six different physical systems observed on scattered locations and show its state-of-the-art performance compared to other non-grid based models.

Overall, our contributions can be summarized as follows:
\begin{enumerate}
    \item We introduce GrINd - a novel network architecture for forecasting physical systems from sparse, scattered observational data.
    \item We combine a Fourier Interpolation Layer and a NeuralPDE model for efficient predictions in high-resolution space
    \item We empirically evaluate our model on the DynaBench dataset, demonstrating state-of-the-art performance compared to existing models.
\end{enumerate}

We make the source code containing our model as well as all experiments in this paper publicly available.~\footnote{\repourl}

\section{Related Work}


Learning the behavior of real physical systems from empirical data using machine learning techniques has become of increasing interest in recent years.
The work in this area can generally be divided into two overlapping areas: data-driven solution and data-driven discovery of partial differential equations.
Examples of the former include data-driven models for predicting the evolution of various physical systems such as weather with transformer models~\cite{Bi2023,Kurth2022,Lam2023}, fluid simulation with graph neural networks~\cite{SanchezGonzalez2020} or simply solving specific PDEs using Physics Informed Neural Networks~\cite{Raissi2017a}.
Examples of the latter include SINDy~\cite{Brunten2016} which uses sparse regression to find the underlying governing equations of a system and PDE Net~\cite{Long2019} which uses a set of learnable filters and symbolic regression to reconstruct the equation.
The relevance of the area also emerges from the increasing number of benchmark datasets available for physical systems~\cite{Dulny2023,Rasp2023,Nathaniel2024}. 

Most works use high-resolution grid data which offers many advantages, allowing the use of well-developed models and architectures such as a modified vision transformer ~\cite{Bi2023,Kurth2022,Lam2023,Alkin2024,Li2023a,Lessig2023}.
Another approach by Ayed et al.~\cite{Ayed2019} uses a hidden-state neural-based model to forecast dynamical systems using a ResNet.
The Finite Volume Neural Network (FINN)~\cite{Praditia2021} predicts the evolution of diffusion-type systems by explicitly modeling the flow between grid points utilizing the Finite Volume Method.
NeuralPDE~\cite{Dulny2022} combines CNNs and the method of lines solver to find a solution for the underlying PDE.

Only a limited number of approaches have been proposed to tackle the challenging task of forecasting a physical system from scattered data.
Iakovlev et al.~\cite{Iakovlev2021} introduce continuous-time representation and learning of the dynamics of PDE-driven systems using a GNN.
The MGKN (multipole graph kernel network)~\cite{Li2020} proposed by Li et al. unifies GNNs with multiresolution matrix factorization to capture long-range correlations.
PhyGNNet~\cite{Jiang2023} divides the computational domain into meshes and uses the message-passing mechanism, as well as the discrete difference method, based on Taylor expansion and least squares regression, to predict future states.
Graph networks, however, can become computationally expensive when reaching a certain size, common in real-world tasks.

To the best of our knowledge, our approach is the first model to take advantage of Fourier Analysis to interpolate observations onto a higher resolution grid.
Several deep learning approaches have been proposed to perform training entirely in the Fourier domain such as Fourier Convolutional Neural Networks~\cite{pratt2017fcnn} or the Fourier Neural Operator~\cite{LiuSchiaffini2024}.
However, all works taking advantage of the Fast Fourier Transform (FFT) still require the input data to be grid-structured.

\section{Method}
In this section, we describe the technical details of our proposed approach.
The overall architecture of our approach consisting of the Fourier interpolation layer and the predictive NeuralPDE model can be seen schematically in~\Cref{fig:figure1}.

\subsection{Fourier Interpolation Layer}
\label{sec:fourier_interpolation_layer}
One of the core components of our approach is the Fourier interpolation layer, which takes as input the values of a function $u$ at several non-uniform locations $X=\{\mathbf{x}_1,\ldots,\mathbf{x}_H\}$ and outputs the interpolated values of the function $\Tilde{u}$ at grid locations $X_{grid}$.
To calculate the interpolation in a differentiable way, we leverage the approximation properties of the Fourier series summarized in~\Cref{theo:fourier_series}.

\begin{figure}[ht]
\includegraphics[width=1\textwidth]{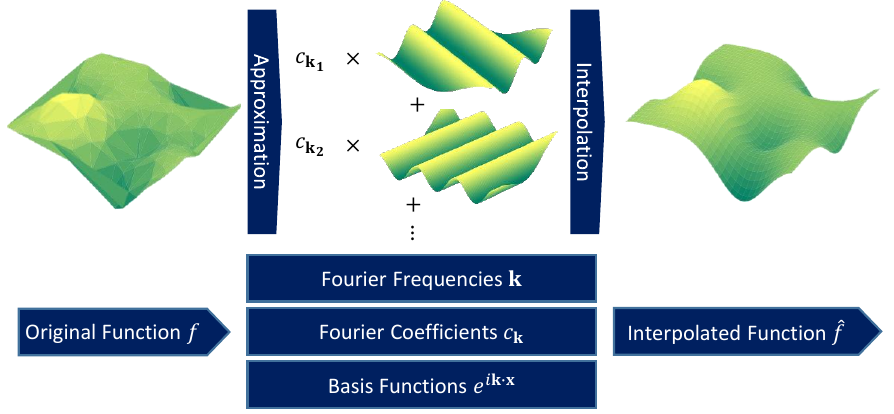}
\caption{Fourier Interpolation Layer. We use a Fourier series approximation of the original function $u$ by fitting the Fourier coefficients $c_{\mathbf{k}}$ using LLS. The Fourier coefficients can be used to evaluate the approximation at any arbitrary collection of points $\mathbf{x}$ thus making it possible to interpolate the function onto a high-resolution grid.}
\label{fig:fourier_layer}
\end{figure}

\begin{theorem}
\label{theo:fourier_series}
Given a periodic complex function $u\colon \Omega \rightarrow \mathbb{C}$ on the interval box $\Omega = [0, 1]^M$, such that $\int_{\Omega}||u||^2d\mathbf{x} < \infty$, the following series:
\begin{equation}
\label{eq:fourier_approximation}
    \hat{u}_{N}(\mathbf{x}) = \sum_{\mathbf{k}\in K} c_{\mathbf{k}}(u)e^{2i\pi\mathbf{k}\cdot\mathbf{x}}
\end{equation}

with 
\begin{align*}
    c_{\mathbf{k}}(u) &= \int_{\Omega}u(\mathbf{x})e^{2i\pi\mathbf{k}\cdot\mathbf{x}}d\mathbf{x} \\ 
    N &= (N_1, \ldots, N_M) \\
    K &= K_{N_1,\ldots,N_M} := K_{N_1}\times \ldots\times K_{N_M} \\
    K_{N_i} &= \begin{cases}
        \{-\frac{N_i}{2},\ldots,  \frac{N_i}{2}-1\} \text{ for } N_i \text{ even} \\ 
        \{-\frac{N_i-1}{2},\ldots,  \frac{N_i-1}{2}\} \text{ for } N_i \text{ odd}
    \end{cases}
\end{align*}
converges pointwise to $u$ as $N_i\rightarrow\infty$, i.e. 
\begin{equation*}
    \text{for each } \mathbf{x}\in[0, 1]^M\colon \lim_{\min{N_i}\rightarrow\infty} \hat{u}_{N}(\mathbf{x}) = u(\mathbf{x})
\end{equation*}
\end{theorem}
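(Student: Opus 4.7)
The plan is to first establish the one-dimensional case ($M=1$) and then extend to arbitrary $M$ by exploiting the tensor product structure of the frequency index set $K = K_{N_1}\times\cdots\times K_{N_M}$ together with Fubini's theorem. For $M=1$, I would proceed in two steps. First, I would show that the trigonometric system $\{e^{2i\pi k x}\}_{k\in\mathbb{Z}}$ is an orthonormal basis of $L^2([0,1])$, which follows from Stone--Weierstrass (trigonometric polynomials are uniformly dense in continuous periodic functions and hence dense in $L^2$); Parseval's identity then gives that the partial sums $\hat{u}_N$ converge to $u$ in $L^2$ norm. Second, I would upgrade $L^2$ convergence to pointwise convergence by rewriting $\hat{u}_N(x) = (D_N * u)(x)$ as a convolution with the Dirichlet kernel and invoking a classical pointwise-convergence criterion.

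The main obstacle, and in fact a subtle issue with the statement as written, is that pointwise convergence everywhere does not hold for a general square-integrable function: Carleson's theorem only provides almost-everywhere convergence, and genuine everywhere-pointwise convergence requires regularity beyond $L^2$. I would therefore clarify the hypothesis and finish the argument under one of the standard refinements: Dirichlet--Jordan if $u$ has bounded variation, Dini's criterion if $u$ satisfies a local H\"older condition at $\mathbf{x}$, or uniform convergence via absolute summability of the Fourier coefficients if $u \in C^1$. In the applied setting of this paper $u$ represents a smooth physical field, so any of these strengthenings is harmless.

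For $M > 1$, I would write the symmetric partial sum as an iterated sum
\begin{equation*}
    \hat{u}_N(\mathbf{x}) = \sum_{k_1\in K_{N_1}}\!\cdots\!\sum_{k_M\in K_{N_M}} c_{\mathbf{k}}(u)\, e^{2i\pi k_1 x_1}\cdots e^{2i\pi k_M x_M},
\end{equation*}
use Fubini to recognize that, with $x_2,\ldots,x_M$ fixed, the inner $M-1$ sums assemble into the coefficients of a one-dimensional Fourier partial sum of the corresponding slice of $u$, and then apply the 1D pointwise convergence result iteratively along each coordinate axis to conclude the desired limit.
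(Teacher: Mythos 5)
The paper does not actually prove \Cref{theo:fourier_series}; it simply cites \cite{Serov_2017} and \cite{Potts2021}, whereas you sketch an actual argument, so you take a genuinely different route. Your most valuable observation is also a correct one: the theorem as stated in the paper is false under the bare hypothesis $u\in L^2(\Omega)$. In one dimension, Carleson's theorem gives only almost-everywhere convergence, and Kolmogorov's example shows everywhere divergence is possible for $L^1$; so ``for each $\mathbf{x}$'' requires some regularity beyond square-integrability, exactly as you flag, and the cited references indeed prove pointwise statements only under such additional hypotheses (Dini, bounded variation, $C^1$, etc.). A careful writer would either weaken the conclusion to a.e.\ convergence or strengthen the hypothesis, and in the applied setting smoothness is freely available. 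Two places where your sketch is looser than it should be: (i) the $M>1$ step by Fubini and iterated 1D limits does not automatically yield the joint rectangular limit $\min N_i\to\infty$; one needs uniform (or at least dominated) convergence of the inner one-dimensional partial sums in the remaining variables to interchange limits, which again follows from smoothness but should be said explicitly. Moreover, the higher-dimensional situation is strictly worse than $M=1$: rectangular partial sums of general $L^2$ functions can fail to converge even almost everywhere, so the regularity upgrade you propose is not merely cosmetic in $M\ge 2$. (ii) You do not remark on the sign error in the paper's coefficient formula --- the exponent in $c_{\mathbf{k}}(u)$ should read $e^{-2i\pi\mathbf{k}\cdot\mathbf{x}}$ for the reconstruction in \Cref{eq:fourier_approximation} to recover $u$ rather than its reflection; any self-contained proof would need to correct this.
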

%
%
\begin{proof}
The proof of this theorem can be found in~\cite{Serov_2017} or~\cite{Potts2021}.
\end{proof}

This theorem underpins various numerical methods and techniques, including the Fast Fourier Transform (FFT), which plays an important role in signal processing~\cite{Bi_Zeng_2004}, data compression~\cite{redinbo2000}, and solving differential equations efficiently~\cite{Kopriva_2009}. 
However, applying FFTs directly to non-uniformly sampled data presents challenges, as the traditional approach assumes data points are uniformly distributed along the domain~\cite{shih2021,BARNETT20211}. 
To address this limitation, we employ an estimation technique to compute the Fourier coefficients $c_{\mathbf{k}}$ for non-uniformly sampled points as summarized in~\Cref{fig:fourier_layer}. 

Specifically, given the Fourier series expressed as~\Cref{eq:fourier_approximation} and the function $u$ measured at several non-uniform locations $X=\{\mathbf{x}_1,\ldots,\mathbf{x}_H\}$ we are trying to estimate the coefficients $c_{\mathbf{k}},\mathbf{k}\in K$ such that the approximation error 
\begin{equation}
    \label{eq:fourier_approximation_error}
    \sum_{\mathbf{x}\in X} ||u(\mathbf{x}) - \hat{u}_{N}(\mathbf{x})||^2
\end{equation}
is minimized.
$N = (N_1, \ldots, N_M)$ represents the number of frequencies in each dimension.
We note that while the number of frequencies can be chosen separately for each dimension $1,\ldots, M$ in practice we keep this parameter constant as $N_1 =\ldots= N_M =\colon N_{freq}$.

We formulate the estimation process as a linear least squares (LLS) problem where the coefficients $c_{\mathbf{k}}$ are the unknowns. 
The LLS problem can be expressed as follows:

\begin{equation}
\label{eq:optimization_problem}
    \min_{\mathbf{c}\in\mathbb{R}^{N_1\ldots N_M}}||\mathbf{A}\mathbf{c}-\mathbf{b}||^2
\end{equation}





where, $\mathbf{A}$ is a matrix representing the basis functions $e^{2i\pi\mathbf{k}\cdot\mathbf{x}}$ evaluated at the non-uniform sample points $\mathbf{x}$, i.e.
$A = [a_{jl}] = [e^{2i\pi\mathbf{k}_l\cdot\mathbf{x}_j}]_{\mathbf{k}_l\in K, \mathbf{x}_j\in X}$, 
$\mathbf{c}$ is a vector containing the unknown Fourier coefficients:
$\mathbf{c} = [c_{\mathbf{k}_l}]_{\mathbf{k}_l\in K}$
and $\mathbf{b}$ is a vector containing the values of the function $u$ evaluated at the non-uniform sample points $\mathbf{x}$:
$\mathbf{b} = [u(\mathbf{x}_j)]_{\mathbf{x}_j\in X}$

Solving this linear least squares system allows us to estimate the Fourier coefficients efficiently, enabling the interpolation $\Tilde{u}$ of the function $u$ onto a uniform grid of high resolution in Fourier space. 
We achieve this by evaluating the Fourier series approximation given by~\Cref{eq:fourier_approximation} at new high-resolution grid location points $\mathbf{x}\in X_{grid}$ as follows:

Assuming $\mathbf{c^*}$ is the solution to the LLS problem given in~\Cref{eq:optimization_problem}, we calculate the interpolated values at $\mathbf{x}\in X_{grid}$ as
\begin{equation}
    \Tilde{u}(\mathbf{x}) = \Re(\sum_{\mathbf{k}\in K} c^*_{\mathbf{k}}e^{2i\pi\mathbf{k}\cdot\mathbf{x}})
\end{equation}
where $\Re(z) = a$ is the real part of a complex number $z=a+bi\in\mathbb{C}$ .

This procedure originally only applies to the interpolation of periodic functions.
In~\Cref{app:fi_periodic_extension} we show how our approach can be extended to non-periodic functions.

Overall we call this interpolation procedure the $\operatorname{FI}$ (Fourier Interpolation) Layer which in summary takes as input a set of arbitrary locations $X\subset\mathbb{R}^M$ together with the corresponding values $u(X)$ of a function $u$ and a set of target locations $X_{\mathrm{target}}\subset\mathbb{R}^M$, transforms them into the Fourier coefficients $[c_\mathbf{k}]_{\mathbf{k}\in{K}}$.
These are then used to evaluate the Fourier approximation at the set of target locations~$X_{\mathrm{target}}$.
Overall this can be summarized as:

\begin{equation}
    \label{eq:fourier_interpolation_layer}
    \operatorname{FI}(X, u(X), X_{\mathrm{target}})\xrightarrow{\min_{\mathbf{c}\in\mathbb{R}^{|K|}}||\mathbf{A}\mathbf{c}-\mathbf{b}||^2}\mathbf{c}^*\xrightarrow{\Re(\sum_{\mathbf{k}\in K} c^*_{\mathbf{k}}e^{2i\pi\mathbf{k}\cdot\mathbf{x}})}\Tilde{u}(X_{\mathrm{target}})
\end{equation}

\subsection{NeuralPDE}
NeuralPDE were originally proposed by Dulny et al.~\cite{Dulny2022} for modeling dynamical systems, particularly those governed by partial differential equations (PDEs).
The core idea lies in representing the underlying PDE using the Method of Lines, which discretizes the spatial dimensions and transforms the PDE into a system of ordinary differential equations (ODEs).
This discretization allows spatial derivatives to be represented as convolutional operations, making CNNs a natural choice for parametrization, which are then trained using a differentiable ODE solver~\cite{Chen2018}.

Specifically, given measurements at grid locations $\mathbf{U}(t) = [U_{ij}(t)]$ of an underlying physical system $u$ which is assumed to be governed by some partial differential equation:

\begin{equation}
\label{eq:governing_equation}
    \frac{\partial u}{\partial t}=F(u, \nabla u, \nabla^2 u,\ldots)
\end{equation}
the task is to learn the function $F$ from data and use it to forecast the state of the system at future timepoints $t+1,\ldots, t+T$.

The method of lines treats the system $\mathbf{U}$ as a collection of ordinary differential equations (ODEs), with the value of the system at each of the grid locations as one variable in the system.
The governing~\Cref{eq:governing_equation} then becomes an ODE

\begin{equation}
\label{eq:governing_equation_ode}
    \frac{\diff\mathbf{U}}{\diff t} = \Tilde{F}(\mathbf{U})
\end{equation}
with all partial derivatives $\nabla u, \nabla^2 u,\ldots$ replaced by finite difference approximations, at grid locations, e.g. $\frac{\partial U_{ij}}{\partial x}\sim U_{ij+1}-U_{ij-1}$.

The function $\Tilde{F}$ can now be parametrized using a convolutional network $\theta$, as every finite difference approximation can be learned using a convolution operator~\cite{Dulny2022}, e.g. $\frac{\partial U_{ij}}{\partial x}\sim U_{ij+1}-U_{ij-1}$ corresponds to the convolution $\left[\begin{array}{rrr}
        0 & 0 & 0 \\ 
        -1 & 0 & 1 \\
        0 & 0 & 0 \\ 
    \end{array}\right]$.

The NeuralPDE model $\operatorname{N_{\theta}}$ is trained using a differentiable ODE solver~\cite{Chen2018} to forecast the physical system $u$.

\subsection{GrINd}
The overall architecture of our model is displayed in~\Cref{fig:figure1}.
We combine two Fourier Interpolation Layers with a spatiotemporal prediction model on gridded data to form the \textbf{Gr}id \textbf{I}nterpolation \textbf{N}etwork for Scattere\textbf{d} Observations (GrINd).
We select NeuralPDE~\cite{Dulny2022} as the spatiotemporal prediction model, as it showed best performance on the DynaBench Benchmark~\cite{Dulny2023}.

We first use a Fourier Interpolation Layer $\operatorname{FI}_1$ to transform the scattered, low-resolution observations $U_t^{\mathrm{low}}$ at locations $X_{\mathrm{low}}$ at time $t$ into a high-resolution grid interpolation $\hat{U}_t^{\mathrm{high}}$ at grid locations $X_{\mathrm{high}}$.
In the high-resolution space we use the NeuralPDE model $\operatorname{N_{\theta}}$ to forecast the state of the system at time $t+1$ as $\hat{U}_{t+1}^{\mathrm{high}}$.
Finally we transform the high resolution predictions back into the original space using another Fourier Interpolation Layer $\operatorname{FI}_1$ to obtain the final predictions $U_{t+1}^{\mathrm{low}}$.
Overall this can be written as:
\begin{equation}
\label{eq:grind}
    \operatorname{GrINd}(U_t^{\mathrm{low}}) = \operatorname{FI}_2(X_{\mathrm{high}},\operatorname{N_{\theta}}(\operatorname{FI}_1(X_{\mathrm{low}}, U_t^{\mathrm{low}},X_{\mathrm{high}})), X_{\mathrm{low}})
\end{equation}

\section{Experiments}
In this section we describe the experimental setup used for evaluating the performance of our proposed model.
\subsection{Data}
For our experiments we use the DynaBench~\cite{Dulny2023} dataset proposed by Dulny et al., which serves as a standardized benchmark for evaluating machine learning models on dynamical systems with sparse observations.
The dataset consists of six physical systems described by partial differential equations: advection, Burgers', Gas Dynamics, Kuramoto-Sivashinsky, Reaction-Diffusion and the Wave equation.
The equations have been simulated on a high-resolution grid with periodic boundary conditions using the finite differences method~\cite{leveque2007} and recorded at spatially scattered locations.
This mimics a real-world setting where measurement stations are typically also sparse and not located on a grid.

For our experiment we use the same version of the dataset on which the original benchmark experiments were performed~\cite{Dulny2023}.
It contains \num{7000} simulations of each physical system with varying initial conditions (splitted into \num{5000} train, \num{1000} validation and \num{1000} test simulations), each recorded over~\num{201} time steps at \num{900} measurement locations.

For evaluating the performance of our proposed Fourier Interpolation layer, we use the raw high-resolution version of the dataset as ground truth for comparison with the output of the interpolation layer.
Similar to Dulny et al., we also compare our results to models trained and evaluated on a version of the dataset containing observations on a $30\times 30$ grid.

\subsection{Baseline Models}
\label{sec:baseline_models}
We compare the performance of our proposed approach to the original models evaluated in the DynaBench benchmark~\cite{Dulny2023}.
These include four graph- and point-cloud based models: Point Transformer~\cite{Zhao2021}, Point GNN~\cite{Shi2020}, Graph Kernel Network~\cite{Anandkumar2019} and Graph PDE~\cite{Iakovlev2021}.
In our results we omit three models included in the original benchmark as they are strictly outperformed by all other models.

The baselines also include two models designed to work on grid data: NeuralPDE~\cite{Dulny2022} and a Residual Network~\cite{He2016}.
We omit the CNN baseline included in the original benchmark as it showed subpar performance.

Additionally we include the \textit{persistence} baseline from the original benchmark which corresponds to predicting no change of the system at all.

\subsection{Model Configuration}
We use $N_{freq}=18$ Fourier Frequencies in each Fourier Interpolation Layer, as this values shows the lowest interpolation error on most physical systems from the DynaBench dataset.
The target interpolation points are distributed along a $64\times 64$ grid, which is the original resolution of the simulations in the DynaBench dataset.

The NeuralPDE model architecture is taken from~\cite{Dulny2023}.

\subsection{Training}
We replicate the original training setting from the DynaBench benchmark as closely as possible.
The models are trained to predict the next simulation step and evaluated in a closed-loop rollout on 16 simulation steps.
We train the models with the Mean Squared Error loss using the Adam Optimizer~\cite{KingBa15} with a learning rate of ~\num{0.0001}.

All models are trained for a maximum of \num{100000} optimization steps while monitoring the 16 step rollout MSE on a separate validation dataset every \num{1000} steps with early stopping if the metric did not improve after 5 consecutive checks.
Afterwards the best checkpoint with respect to the validation metric is used for testing.

We implemented our models using PyTorch~\cite{NEURIPS2019_9015} and Pytorch Lightning~\cite{Falcon_PyTorch_Lightning_2019}.
The source code of our models and the performed experiments is available under \repourl.

\begin{figure}[t]
\includegraphics[width=1\textwidth]{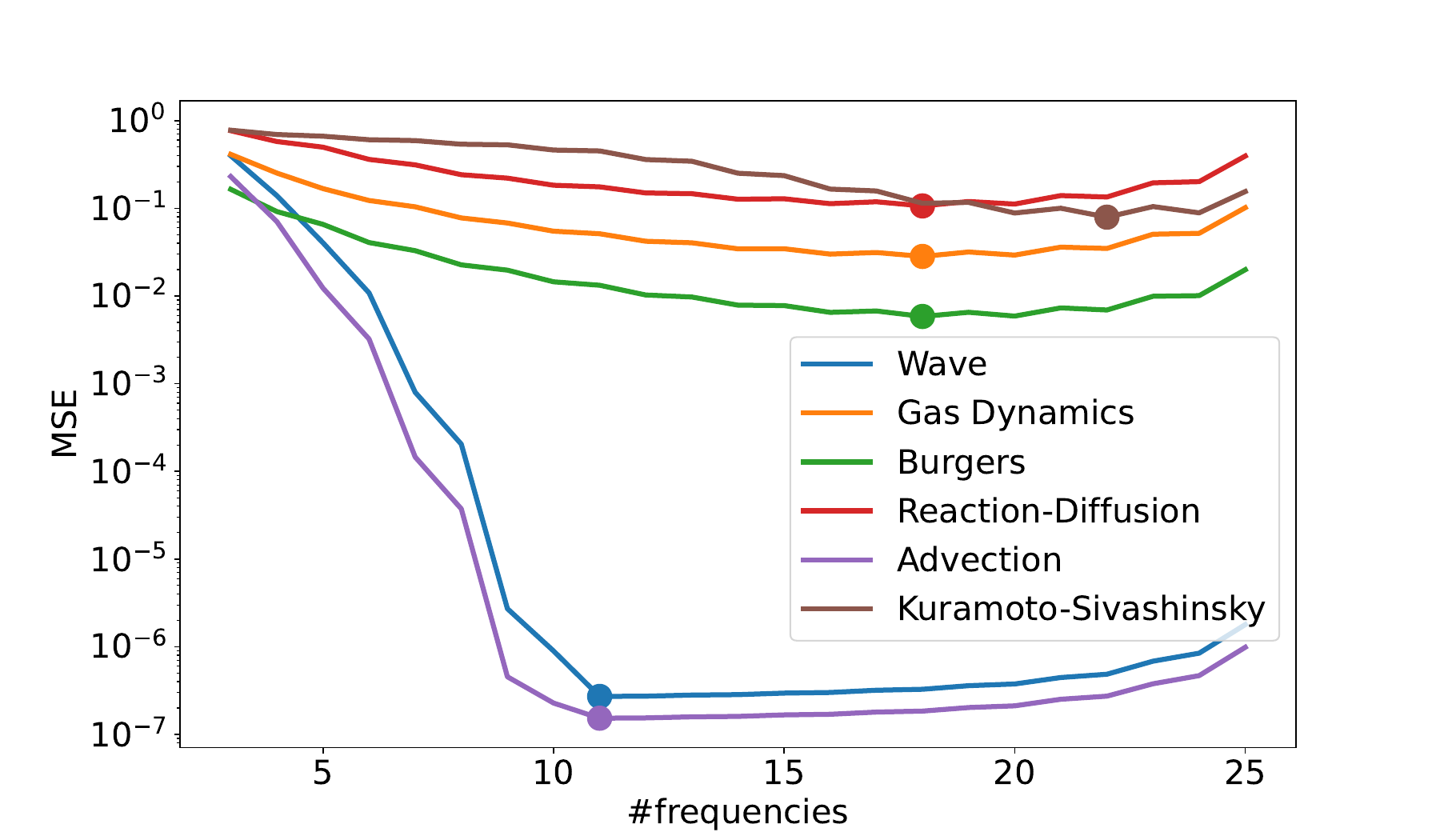}
\caption{Results of the interpolation experiment. The plot shows the Mean Squared Error between the interpolation given by our Fourier Interpolation Layer and the full resolution dataset as a function of the number of fourier frequencies used. The lowest interpolation error for each dataset has been marked with a dot.}
\label{fig:interpolation_results}
\end{figure}

\section{Results and Discussion}
In this section we present the experiments we performed and discuss the results.

\subsection{Interpolation accuracy}
\label{sec:interpolation_experiment}

Firstly, to showcase the feasibility of our proposed approach, we evaluate the performance of our proposed Fourier Interpolation Layer on the task of directly interpolating a set of scattered observations onto a high resolution grid.
To this end, we use the DynaBench dataset with \num{900} measurement locations as input to a single Fourier Interpolation Layer with interpolation target points distributed on a $64\times 64$ grid over the simulation domain. 
We compare the output of the Fourier Interpolation Layer to the \textit{full} resolution version of the DynaBench dataset which was used to sample the scattered observations.
We quantify the interpolation error using the \textit{Mean Squared Error} (MSE) metric.
The experiment is repeated for different selection of frequencies $N_{freq}$, varying between \num{3} and \num{25} (cf.~\Cref{sec:fourier_interpolation_layer}).

\Cref{fig:interpolation_results} illustrates the outcomes of the interpolation experiment conducted across six distinct physical systems within the DynaBench dataset. 
Across all systems, a consistent trend emerges, characterized by a distinctive "U" shape in the performance curves.
This trend is intuitively comprehensible: when a limited number of Fourier frequencies are chosen, the Fourier approximation fails to capture the intricate nature of the function adequately. 
Conversely, employing a high number of Fourier frequencies leads to overfitting of the Fourier Interpolation Layer, resulting in elevated reconstruction errors.

Furthermore, it becomes apparent that the optimal number of frequencies varies depending on the complexity of the system, suggesting that individual selection for each dataset is ideal. 
Nevertheless, the results show that for three out of the six systems, the optimal number of frequencies is 18. 
Therefore, we suggest this number as practical out-of-the-box choice of this hyperparameter and use it for our experiments on the DynaBench datasets.

\begin{figure}[t]
 \centering
\includegraphics[width=1\textwidth]{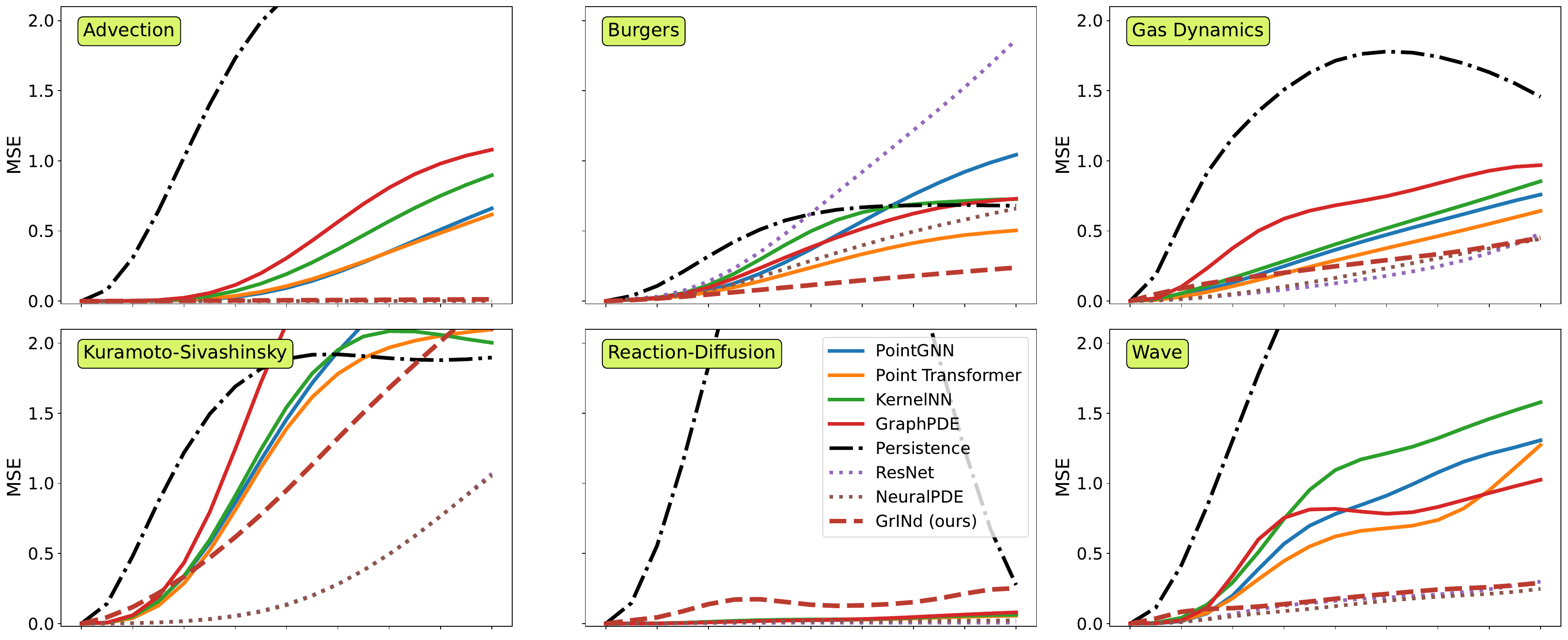}
\caption{Rollout results on the DynaBench dataset for 16 steps. The graph and point cloud baselines are plotted with solid lines, the grid baselines with dotted lines, the persistence baseline with a dash-dotted line and our model with a dashed line. Results of our model have been averaged over 10 runs. Baseline results taken from~\cite{Dulny2023}. For better readability MSE scores over 2.0 are not displayed.}
\label{fig:rollout_results}
\end{figure}

\subsection{DynaBench}
In this section, we present the results of our proposed GrINd model on the DynaBench dataset, comparing its performance against the baseline models described in~\Cref{sec:baseline_models}. 

\Cref{fig:rollout_results} shows the rollout results of our model over 16 steps compared to the baselines from the DynaBench dataset.
Overall, with the exception of the \textit{Reaction-Diffusion} and \textit{Kuramoto-Sivashinsky} equations, our model outperforms all models that work on scattered data (PointGNN, Point Transformer, KernelNN and GraphPDE) by a large margin.
This is especially true over longer prediction horizons.

In case of the \textit{Advection}, \textit{Gas Dynamics} and \textit{Wave} equation, our model performs on par with grid models, which get access to a grid structured array of observation locations.
While the grid models still get access to the same amount of measurement location ($30\times 30$ grid vs $900$ scattered observations), this additional structure has been shown to be beneficial for predicting the evolution of a physical system~\cite{Dulny2023}.

Finally, in case of the \textit{Burgers'} equation our model even outperforms the best grid model from the DynaBench benchmark.
This is possible because our model interpolates the system onto a higher-resolution grid than was used to train the baseline models.
We hypothesize that in this case the performance gain from using a higher grid outweighs the performance loss from the interpolation error (cf.~\Cref{sec:interpolation_experiment}).

\begin{table}[ht]
    \caption{MSE after 1 prediction step. The best perfoming model for each equation has been \underline{underlined}. Additionally, the best non-grid model has been \uwave{underwaved}. A = Advection, B = Burgers', GD = Gas Dynamics, KS = Kuramoto-Sivashinsky, RD = Reaction-Diffusion, W = Wave. Results of our model have been averaged over 10 runs. Baseline results taken from~\cite{Dulny2023}. }
    \centering
    \begin{tabular}{l@{\hskip 0.25cm}l@{\hskip 0.25cm}l@{\hskip 0.25cm}l@{\hskip 0.25cm}l@{\hskip 0.25cm}l@{\hskip 0.25cm}l}
    \toprule
    model &  A &  B &  GD &  KS &  RD &     W \\
    \midrule
    GrINd(ours)     &    $1.92\cdot 10^{-4}$ & $\uwave{6.56\cdot 10^{-3}}$ &       $4.93\cdot 10^{-2}$ &            $4.54\cdot 10^{-2}$ &            $2.33\cdot 10^{-2}$ & $3.61\cdot 10^{-2}$ \\
    \midrule
    GraphPDE          &   $1.37\cdot 10^{-4}$ & $1.07\cdot 10^{-2}$ &      $1.95\cdot 10^{-2}$ &              $7.20\cdot 10^{-3}$ &            $1.42\cdot 10^{-4}$ & $2.07\cdot 10^{-3}$ \\
    KernelNN          &   $6.31\cdot 10^{-5}$ & $1.06\cdot 10^{-2}$ &      $1.34\cdot 10^{-2}$ &              $6.69\cdot 10^{-3}$ &            $1.87\cdot 10^{-4}$ & $5.43\cdot 10^{-3}$ \\
    Point TF &   $4.42\cdot 10^{-5}$ & $1.03\cdot 10^{-2}$ &      $\uwave{7.25\cdot 10^{-3}}$ &              $\uwave{4.90\cdot 10^{-3}}$ &            $1.41\cdot 10^{-4}$ & $2.38\cdot 10^{-3}$ \\
    PointGNN          &   $\uwave{2.82\cdot 10^{-5}}$ & $\underline{8.83\cdot 10^{-3}}$ &      $9.02\cdot 10^{-3}$ &              $6.73\cdot 10^{-3}$ &            $\uwave{\underline{1.36\cdot 10^{-4}}}$ & $\uwave{\underline{1.39\cdot 10^{-3}}}$ \\
    \midrule
    NeuralPDE         &   $\underline{8.24\cdot 10^{-7}}$ & $1.12\cdot 10^{-2}$ &      $3.73\cdot 10^{-3}$ &              $5.37\cdot 10^{-4}$ &            $3.03\cdot 10^{-4}$ & $1.70\cdot 10^{-3}$ \\
    ResNet            &   $2.16\cdot 10^{-6}$ & $1.48\cdot 10^{-2}$  &      $\underline{3.21\cdot 10^{-3}}$ &              $\underline{4.90\cdot 10^{-4}}$ &            $1.57\cdot 10^{-4}$ & $1.46\cdot 10^{-3}$ \\
    \midrule
    Persistence       &   $8.12\cdot 10^{-2}$ & $3.68\cdot 10^{-2}$ &      $1.87\cdot 10^{-1}$ &              $1.42\cdot 10^{-1}$ &            $1.47\cdot 10^{-1}$ & $1.14\cdot 10^{-1}$ \\
    \bottomrule
    \end{tabular}

    \label{tab:results_single}
\end{table}

\textbf{Single Prediction Steps} 
We additionally numerically evaluate the performance of our model both on single and multi-step predictions.
\Cref{tab:results_single} displays the Mean Squared Error (MSE) after a single prediction step for each physical system in the DynaBench dataset. 
While in this case our GrINd model still achieves competitive performance, it is however outperformed by other grid and non-grid based models with the exception of the \textit{Burgers'} equation.
We observe that in this case the numerical interpolation error, as examined in~\Cref{sec:interpolation_experiment} is larger than the numerical forecasting error for the baseline models.
We hypothesize that the gain from mapping the observations onto a high-resolution grid is not sufficient to outweigh the interpolation error.
This is also supported by the conclusion from the original DynaBench benchmark, which suggests that for shorter prediction horizons grid models perform on par with non-grid models.

\begin{table}[ht]
    \caption{MSE after 16 prediction steps. The best perfoming model for each equation has been \underline{underlined}. Additionally, the best non-grid model has been \uwave{underwaved}. A = Advection, B = Burgers', GD = Gas Dynamics, KS = Kuramoto-Sivashinsky, RD = Reaction-Diffusion, W = Wave. Results of our model have been averaged over 10 runs. Baseline results taken from~\cite{Dulny2023}. }
    \centering
    \begin{tabular}{l@{\hskip 0.25cm}l@{\hskip 0.25cm}l@{\hskip 0.25cm}l@{\hskip 0.25cm}l@{\hskip 0.25cm}l@{\hskip 0.25cm}l}
    \toprule
     model &     A &  B & GD &   KS &   RD & W \\
    \midrule
    GrIND (ours)     &    $\uwave{1.23\cdot 10^{-1}}$ & $\uwave{\underline{2.38\cdot 10^{-1}}}$ &       $\uwave{4.51\cdot 10^{-1}}$ &            $2.33\cdot 10^{0}$ &            $2.53\cdot 10^{-1}$ & $\uwave{2.90\cdot 10^{-1}}$ \\
     \midrule
    GraphPDE          &   $1.08\cdot 10^{0}$ & $7.30\cdot 10^{-1}$ &      $9.69\cdot 10^{-1}$ &              $2.10\cdot 10^{0}$ &            $8.00\cdot 10^{-2}$ & $1.03\cdot 10^{0}$ \\
    KernelNN          &   $8.97\cdot 10^{-1}$ & $7.27\cdot 10^{-1}$ &      $8.54\cdot 10^{-1}$ &              $\uwave{2.00\cdot 10^{0}}$ &            $6.35\cdot 10^{-2}$ & $1.58\cdot 10^{0}$ \\
    Point TF &   $6.17\cdot 10^{-1}$ & $5.04\cdot 10^{-1}$ &      $6.43\cdot 10^{-1}$ &              $2.10\cdot 10^{0}$ &            $\uwave{5.64\cdot 10^{-2}}$ & $1.27\cdot 10^{0}$ \\
    PointGNN          &   $6.61\cdot 10^{-1}$ & $1.04\cdot 10^{0}$ &      $7.59\cdot 10^{-1}$ &              $2.82\cdot 10^{0}$ &            $5.82\cdot 10^{-2}$ & $1.31\cdot 10^{0}$ \\
    \midrule
    NeuralPDE         &   $2.70\cdot 10^{-4}$ & $6.60\cdot 10^{-1}$ &      $\underline{4.43\cdot 10^{-1}}$ &              $\underline{1.06\cdot 10^{0}}$ &            $2.24\cdot 10^{-2}$ & $\underline{2.48\cdot 10^{-1}}$ \\
    ResNet            &   $\underline{8.65\cdot 10^{-5}}$ & $1.86\cdot 10^{0}$ &      $4.80\cdot 10^{-1}$ &              $1.07\cdot 10^{0}$ &            $\underline{7.05\cdot 10^{-3}}$ & $2.99\cdot 10^{-1}$ \\
    \midrule
    Persistence       &   $2.39\cdot 10^{0}$ & $6.79\cdot 10^{-1}$ &      $1.46\cdot 10^{0}$ &              $1.90\cdot 10^{0}$ &            $2.76\cdot 10^{-1}$ & $2.61\cdot 10^{0}$ \\
    \bottomrule
    \end{tabular}
    \label{tab:results_rollout}
\end{table}

\textbf{Multiple Prediction Steps}
We further assess the predictive capabilities of GrINd by evaluating its performance after 16 prediction steps.
As shown in Table 2, for longer prediction horizons GrINd consistently outperforms non-grid based models in terms of MSE.
For the \textit{Burgers'} equation, our model even outperforms the best grid baseline.
As opposed to single step predictions, longer prediction horizons require more numerical stability which is not guaranteed by models working on unstructured data.
In this case the benefit of using a grid representation outweighs the error associated with using the Fourier Interpolation Layer.
The results for 16 prediction steps highlight our model's robustness in capturing the dynamics of physical systems over longer forecasting horizons.

Overall, the results demonstrate that GrINd is a promising approach for forecasting physical systems from sparse, scattered observational data. 
This is especially the case for longer prediction horizons, where our model shows more numerical stability than other models able to work with scattered data, similar to grid models like NeuralPDE~\cite{Dulny2022} or ResNet~\cite{He2016}.

\section{Conclusion and Future Work}


In this paper, we introduced GrINd (Grid Interpolation Network for Scattered Observations), a novel approach designed to address the challenge of forecasting physical systems from sparse, scattered observational data. 
GrINd leverages the high performance of grid-based deep learning approaches for the task of forecasting physical systems from sparse and scattered data.
It combines a novel Fourier Interpolation Layer with a NeuralPDE model to predict the evolution of spatiotemporal physical systems. 

The empirical evaluations on the DynaBench benchmark dataset reveal that GrINd outperforms existing models, particularly for longer prediction horizons, showcasing its applicability to real-world scenarios.
Analysis shows the potential of using interpolation-based methods for modeling physical systems, as grid-based models show more numerical stability.
Our work also opens up several avenues for future research and development that we summarize in the following.

While we use the NeuralPDE model in our architecture, as it showed best perfomance in the DynaBench benchmark~\cite{Dulny2023}, in principle our approach allows for any grid-based model to be used in its placer.
Future investigations could explore the integration of other such models to further improve predictive performance.

Another promising direction for future work is using a permutation-invariant neural network such as Deep Sets~\cite{Zaheer2017} or Set Transformers~\cite{Lee2019} to predict the Fourier Coefficients.
This could improve the interpolation layer by enabling pre-training on a wide range of data.
Additionally, learning the interpolation layer parameters alongside the NeuralPDE model could lead to better alignment between the interpolation and prediction processes, enhancing overall forecasting accuracy.

Instead of interpolating the data onto a high-resolution grid before learning the dynamics, an alternative approach could be to directly learn the dynamics in Fourier space using neuralODEs~\cite{Chen2018}.
By operating directly in the frequency domain, the model may capture underlying patterns and dependencies more effectively, potentially improving forecasting accuracy.

In summary, GrINd represents a significant contribution in forecasting physical systems from sparse observational data, by leveraging the already high performance of grid models.
However, there are still ample opportunities for further research and development to enhance the approach and address additional challenges in predictive modeling for spatiotemporal physical systems.

\begin{credits}

\subsubsection{\discintname}
The authors have no competing interests to declare that are relevant to the content of this article.
\end{credits}
%
%
%

\bibliographystyle{splncs04}
\bibliography{references}

\begin{thebibliography}{10}
\providecommand{\url}[1]{\texttt{#1}}
\providecommand{\urlprefix}{URL }
\providecommand{\doi}[1]{https://doi.org/#1}

\bibitem{Alkin2024}
Alkin, B., Furst, A., Schmid, S., Gruber, L., Holzleitner, M., Brandstetter, J.: Universal physics transformers (2024)

\bibitem{Anandkumar2019}
Anandkumar, A., Azizzadenesheli, K., Bhattacharya, K., Kovachki, N., Li, Z., Liu, B., Stuart, A.: Neural {Operator}: {Graph} {Kernel} {Network} for {Partial} {Differential} {Equations}. In: ICLR 2020 Workshop on Integration of Deep Neural Models and Differential Equations (Jun 2019)

\bibitem{Ayed2019}
Ayed, I., de~B{\'{e}}zenac, E., Pajot, A., Brajard, J., Gallinari, P.: Learning dynamical systems from partial observations. CoRR  \textbf{abs/1902.11136} (2019). \doi{10.48550/arXiv.1902.11136}

\bibitem{BARNETT20211}
Barnett, A.H.: Aliasing error of the exp⁡(β1−z2) kernel in the nonuniform fast fourier transform. Applied and Computational Harmonic Analysis  \textbf{51},  1--16 (2021). \doi{https://doi.org/10.1016/j.acha.2020.10.002}

\bibitem{Bauer2015}
Bauer, P., Thorpe, A., Brunet, G.: The quiet revolution of numerical weather prediction. Nature  \textbf{525}(7567),  47--55 (Sep 2015). \doi{10.1038/nature14956}

\bibitem{Bi_Zeng_2004}
Bi, G., Zeng, Y.: Transforms and Fast Algorithms for Signal Analysis and Representations. Birkhäuser Boston (2004). \doi{10.1007/978-0-8176-8220-0}

\bibitem{Bi2023}
Bi, K., Xie, L., Zhang, H., Chen, X., Gu, X., Tian, Q.: Accurate medium-range global weather forecasting with {3D} neural networks. Nature  \textbf{619}(7970),  533--538 (Jul 2023). \doi{10.1038/s41586-023-06185-3}

\bibitem{Boussard2023}
Boussard, J., Nagda, C., Kaltenborn, J., Lange, C.E.E., Brouillard, P., Gurwicz, Y., Nowack, P., Rolnick, D.: Towards causal representations of climate model data (2023)

\bibitem{Brunten2016}
Brunton, S.L., Proctor, J.L., Kutz, J.N.: Discovering governing equations from data by sparse identification of nonlinear dynamical systems. Proceedings of the National Academy of Sciences  \textbf{113}(15),  3932--3937 (2016). \doi{10.1073/pnas.1517384113}

\bibitem{Cannon1967}
Cannon, R.H.: Dynamics of {Physical} {Systems}. McGraw-Hill (1967), google-Books-ID: Rix6s2VIlOkC

\bibitem{Chen2018}
Chen, R.T.Q., Rubanova, Y., Bettencourt, J., Duvenaud, D.K.: Neural {Ordinary} {Differential} {Equations}. In: Advances in {Neural} {Information} {Processing} {Systems}. vol.~31. Curran Associates, Inc. (2018)

\bibitem{Cullen1997}
Cullen, M.J., Davies, T., Mawson, M.H., James, J.A., Coulter, S.C., Malcolm, A.: An {Overview} of {Numerical} {Methods} for the {Next} {Generation} {U}.{K}. {NWP} and {Climate} {Model}. Atmosphere-Ocean  \textbf{35}(sup1),  425--444 (Jan 1997). \doi{10.1080/07055900.1997.9687359}

\bibitem{Dulny2022}
Dulny, A., Hotho, A., Krause, A.: {NeuralPDE}: {Modelling} {Dynamical} {Systems} from {Data}. In: Bergmann, R., Malburg, L., Rodermund, S.C., Timm, I.J. (eds.) {KI} 2022: {Advances} in {Artificial} {Intelligence}. pp. 75--89. Lecture {Notes} in {Computer} {Science}, Springer International Publishing, Cham (2022). \doi{10.1007/978-3-031-15791-2_8}

\bibitem{Dulny2023}
Dulny, A., Hotho, A., Krause, A.: {DynaBench}: {A} {Benchmark} {Dataset} for {Learning} {Dynamical} {Systems} from {Low}-{Resolution} {Data}. In: Koutra, D., Plant, C., Gomez~Rodriguez, M., Baralis, E., Bonchi, F. (eds.) Machine {Learning} and {Knowledge} {Discovery} in {Databases}: {Research} {Track}. pp. 438--455. Lecture {Notes} in {Computer} {Science}, Springer Nature Switzerland, Cham (2023). \doi{10.1007/978-3-031-43412-9_26}

\bibitem{Falcon_PyTorch_Lightning_2019}
Falcon, W., {The PyTorch Lightning team}: {PyTorch Lightning} (Mar 2019). \doi{10.5281/zenodo.3828935}

\bibitem{He2016}
He, K., Zhang, X., Ren, S., Sun, J.: Deep residual learning for image recognition. In: 2016 {IEEE} Conference on Computer Vision and Pattern Recognition ({CVPR}). pp. 770--778 (Jun 2016). \doi{10.1109/CVPR.2016.90}, {ISSN}: 1063-6919

\bibitem{Iakovlev2021}
Iakovlev, V., Heinonen, M., L{\"a}hdesm{\"a}ki, H.: Learning continuous-time {\{}pde{\}}s from sparse data with graph neural networks. In: International Conference on Learning Representations (2021)

\bibitem{Jiang2023}
Jiang, L., Wang, L., Chu, X., Xiao, Y., Zhang, H.: Phygnnet: Solving spatiotemporal pdes with physics-informed graph neural network. In: Proceedings of the 2023 2nd Asia Conference on Algorithms, Computing and Machine Learning. p. 143–147. CACML '23, Association for Computing Machinery, New York, NY, USA (May 2023). \doi{10.1145/3590003.3590029}

\bibitem{KingBa15}
Kingma, D., Ba, J.: Adam: A method for stochastic optimization. In: International Conference on Learning Representations (ICLR). San Diega, CA, USA (2015)

\bibitem{Kleinstreuer2010}
Kleinstreuer, C.: Modern {Fluid} {Dynamics}: {Basic} {Theory} and {Selected} {Applications} in {Macro}- and {Micro}-{Fluidics}, Fluid {Mechanics} and {Its} {Applications}, vol.~87. Springer Netherlands, Dordrecht (2010). \doi{10.1007/978-1-4020-8670-0}

\bibitem{Kopriva_2009}
Kopriva, D.A.: Implementing Spectral Methods for Partial Differential Equations. Springer Netherlands (2009). \doi{10.1007/978-90-481-2261-5}

\bibitem{Kurth2022}
Kurth, T., Subramanian, S., Harrington, P., Pathak, J., Mardani, M., Hall, D., Miele, A., Kashinath, K., Anandkumar, A.: Fourcastnet: Accelerating global high-resolution weather forecasting using adaptive fourier neural operators (2022)

\bibitem{Lam2023}
Lam, R., Sanchez-Gonzalez, A., Willson, M., Wirnsberger, P., Fortunato, M., Alet, F., Ravuri, S., Ewalds, T., Eaton-Rosen, Z., Hu, W., Merose, A., Hoyer, S., Holland, G., Vinyals, O., Stott, J., Pritzel, A., Mohamed, S., Battaglia, P.: Learning skillful medium-range global weather forecasting. Science  (Dec 2023). \doi{10.1126/science.adi2336}

\bibitem{larsson2003}
Larsson, S., Thomée, V.: Partial Differential Equations with Numerical Methods. Springer Berlin Heidelberg (2003). \doi{10.1007/978-3-540-88706-5}

\bibitem{Lee2019}
Lee, J., Lee, Y., Kim, J., Kosiorek, A., Choi, S., Teh, Y.W.: Set transformer: A framework for attention-based permutation-invariant neural networks. In: Chaudhuri, K., Salakhutdinov, R. (eds.) Proceedings of the 36th International Conference on Machine Learning. Proceedings of Machine Learning Research, vol.~97, pp. 3744--3753. PMLR (09--15 Jun 2019)

\bibitem{Lessig2023}
{Lessig}, C., {Luise}, I., {Schultz}, M.: {AtmoRep: Large Scale Representation Learning for Atmospheric Data}. In: EGU General Assembly Conference Abstracts. pp. EGU--3117. EGU General Assembly Conference Abstracts (May 2023). \doi{10.5194/egusphere-egu23-3117}

\bibitem{leveque2007}
LeVeque, R.J.: Finite Difference Methods for Ordinary and Partial Differential Equations. Society for Industrial and Applied Mathematics (2007). \doi{10.1137/1.9780898717839}

\bibitem{Li2023a}
Li, Z., Shu, D., Farimani, A.: Scalable transformer for pde surrogate modeling (2023)

\bibitem{Li2020}
Li, Z., Kovachki, N., Azizzadenesheli, K., Liu, B., Stuart, A., Bhattacharya, K., Anandkumar, A.: Multipole {Graph} {Neural} {Operator} for {Parametric} {Partial} {Differential} {Equations}. In: Advances in {Neural} {Information} {Processing} {Systems}. vol.~33, pp. 6755--6766. Curran Associates, Inc. (2020)

\bibitem{LiuSchiaffini2024}
Liu-Schiaffini, M., Berner, J., Bonev, B., Kurth, T., Azizzadenesheli, K., Anandkumar, A.: Neural {Operators} with {Localized} {Integral} and {Differential} {Kernels}. Tech. rep. (Feb 2024). \doi{10.48550/arXiv.2402.16845}, arXiv:2402.16845 [cs, math] type: article

\bibitem{Long2019}
Long, Z., Lu, Y., Dong, B.: {PDE}-{Net} 2.0: {Learning} {PDEs} from data with a numeric-symbolic hybrid deep network. Journal of Computational Physics  \textbf{399}(C) (Dec 2019). \doi{10.1016/j.jcp.2019.108925}

\bibitem{Nathaniel2024}
Nathaniel, J., Qu, Y., Nguyen, T., Yu, S., Busecke, J., Grover, A., Gentine, P.: Chaosbench: A multi-channel, physics-based benchmark for subseasonal-to-seasonal climate prediction (2024)

\bibitem{NEURIPS2019_9015}
Paszke, A., et~al.: Pytorch: An imperative style, high-performance deep learning library. In: Advances in Neural Information Processing Systems 32, pp. 8024--8035. Curran Associates, Inc. (2019)

\bibitem{Potts2021}
Potts, D., Schmischke, M.: Approximation of high-dimensional periodic functions with {Fourier}-based methods. SIAM Journal on Numerical Analysis  \textbf{59}(5),  2393--2429 (jan 2021). \doi{10.1137/20M1354921}, arXiv:1907.11412 [cs, math]

\bibitem{Praditia2021}
Praditia, T., Karlbauer, M., Otte, S., Oladyshkin, S., Butz, M.V., Nowak, W.: Finite volume neural network: Modeling subsurface contaminant transport. CoRR  \textbf{abs/2104.06010} (2021). \doi{10.48550/arXiv.2104.06010}

\bibitem{pratt2017fcnn}
Pratt, H., Williams, B., Coenen, F., Zheng, Y.: Fcnn: Fourier convolutional neural networks. In: Machine Learning and Knowledge Discovery in Databases: European Conference, ECML PKDD 2017, Skopje, Macedonia, September 18--22, 2017, Proceedings, Part I 17. pp. 786--798. Springer (2017)

\bibitem{Raissi2017a}
Raissi, M., Perdikaris, P., Karniadakis, G.E.: Physics informed deep learning (part i): Data-driven solutions of nonlinear partial differential equations (Nov 2017). \doi{10.48550/arXiv.1711.10561}, type: article

\bibitem{Raissi2017b}
Raissi, M., Perdikaris, P., Karniadakis, G.E.: Physics informed deep learning (part {II}): Data-driven discovery of nonlinear partial differential equations (Nov 2017). \doi{10.48550/arXiv.1711.10566}, type: article

\bibitem{Rasp2023}
Rasp, S., Hoyer, S., Merose, A., Langmore, I., Battaglia, P., Russel, T., Sanchez-Gonzalez, A., Yang, V.Q., Carver, R., Agrawal, S., Chantry, M., Bouallègue, Z.B., Dueben, P., Bromberg, C., Sisk, J., Barrington, L., Bell, A., Sha, F.: Weatherbench 2: A benchmark for the next generation of data-driven global weather models (2023)

\bibitem{redinbo2000}
Redinbo, G., Manomohan, R.: Fault-tolerant fft data compression. In: Proceedings. 2000 Pacific Rim International Symposium on Dependable Computing. pp. 110--119 (2000). \doi{10.1109/PRDC.2000.897293}

\bibitem{SanchezGonzalez2020}
Sanchez-Gonzalez, A., Godwin, J., Pfaff, T., Ying, R., Leskovec, J., Battaglia, P.W.: Learning to simulate complex physics with graph networks. In: Proceedings of the 37th International Conference on Machine Learning. ICML'20, JMLR.org (2020)

\bibitem{Schiesser1991}
Schiesser, W.E.: The numerical method of lines : integration of partial differential equations. San Diego : Academic Press (1991)

\bibitem{Serov_2017}
Serov, V.: Fourier Series, Fourier Transform and Their Applications to Mathematical Physics. Springer International Publishing (2017). \doi{10.1007/978-3-319-65262-7}

\bibitem{Shi2020}
Shi, W., Rajkumar, R.: Point-{GNN}: Graph neural network for 3d object detection in a point cloud. In: Proceedings of the IEEE/CVF International Conference on Computer Vision (ICCV). pp. 1711--1719 (2020)

\bibitem{shih2021}
Shih, Y.h., Wright, G., Andén, J., Blaschke, J., Barnett, A.H.: cufinufft: a load-balanced gpu library for general-purpose nonuniform ffts. In: 2021 IEEE International Parallel and Distributed Processing Symposium Workshops (IPDPSW). pp. 688--697 (2021). \doi{10.1109/IPDPSW52791.2021.00105}

\bibitem{Zaheer2017}
Zaheer, M., Kottur, S., Ravanbakhsh, S., Poczos, B., Salakhutdinov, R.R., Smola, A.J.: Deep {Sets}. In: Advances in {Neural} {Information} {Processing} {Systems}. vol.~30. Curran Associates, Inc. (2017)

\bibitem{Zhao2021}
Zhao, H., Jiang, L., Jia, J., Torr, P.H., Koltun, V.: Point transformer. In: Proceedings of the IEEE/CVF International Conference on Computer Vision (ICCV). pp. 16259--16268 (October 2021)

\end{thebibliography}

\end{document}


%
\title{GrINd: Supplementary Material}
%
%
\author{Andrzej Dulny\and
Paul Heinisch\and
Andreas Hotho\and
Anna Krause}
%
\authorrunning{Dulny et al.}
%
\institute{CAIDAS, University Würzburg, Germany\\
\email{andrzej.dulny@uni-wuerzburg.de}\\
\email{paul.heinisch@stud-mail.uni-wuerzburg.de}\\
\email{hotho@informatik.uni-wuerzburg.de}\\
\email{anna.krause@uni-wuerzburg.de}
}
%
\maketitle              
%
%
\appendix
\section{Extension of the FI Layer to non periodic functions}
\label{app:fi_periodic_extension}
Originally our FI Layer can only be applied to non periodic functions.
In practice however, our approach can be extended to any non periodic function $f$ as follows:
We define a surrogate periodic function $g\colon \Omega \rightarrow \mathbb{C}$ as
\begin{equation}
    g(\mathbf{x})=g(x_1, x_2,\ldots, x_M):=f(\phi(x_1),\ldots,\phi(x_M))
\end{equation}  
with
$\phi\colon\mathbb{R}\rightarrow\mathbb{R}$ defined as $\phi(x)=2|\frac{1}{2}-x|$.

\begin{figure}[ht]
\centering
\includegraphics[width=0.7\textwidth]{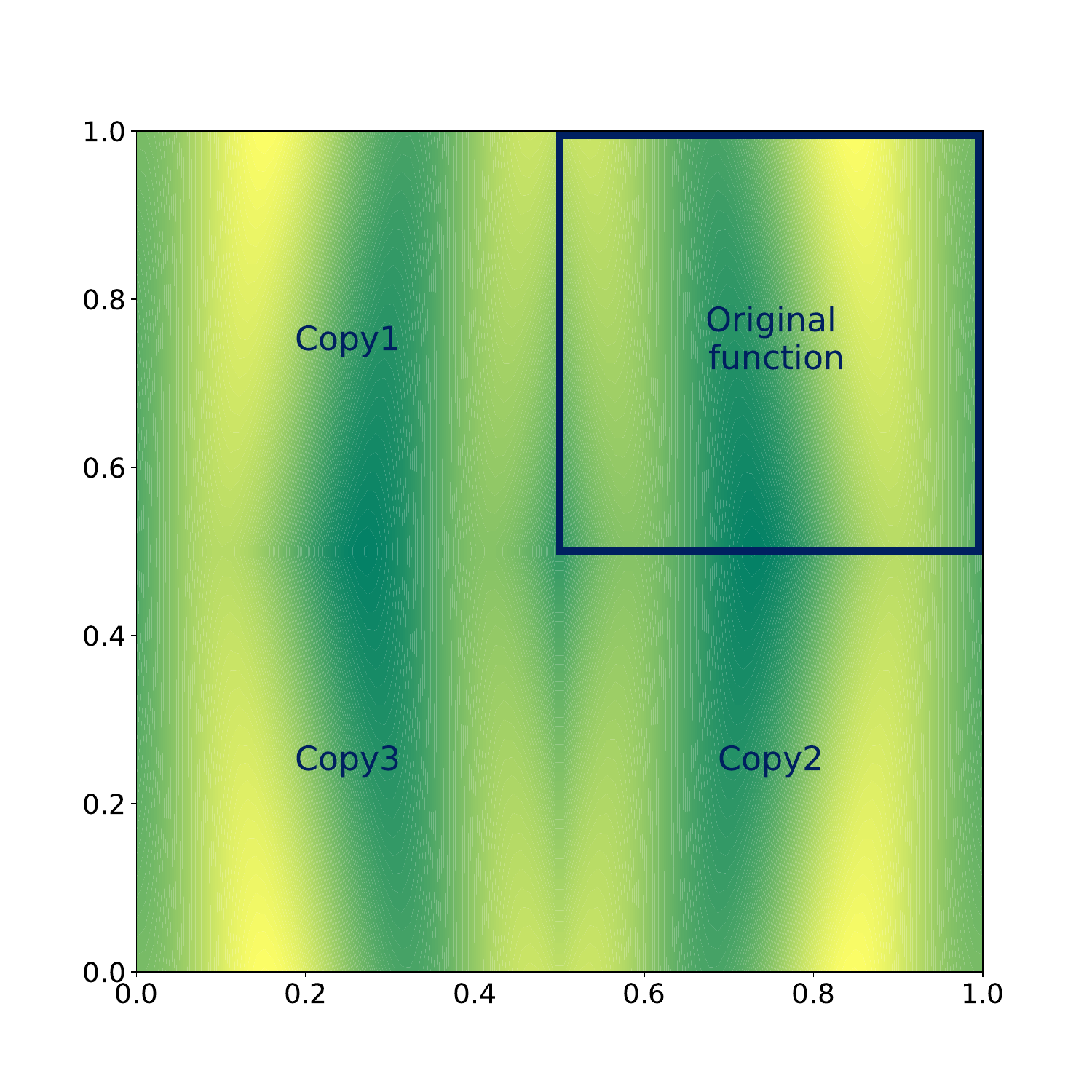}
\caption{Summary of our extension of the FI Layer to non periodic functions. The original function (top left) is compressed to the interval $[\frac{1}{2}, 1]$ in each dimension and then mirrored once in each dimension. The resulting function is periodic in each dimension and can be used as input to the FI Layer.}
\label{fig:non_periodic_extension}
\end{figure}

Intuitively, this corresponds to mirroring the function once in each dimension and concatenating the results along each dimension once as seen in~\Cref{fig:non_periodic_extension}.
This results in $2^M$ copies of the original function $f$ each flipped along one or more dimensions, which is periodic in the original domain. 
The original function $f$ can be recovered and interpolated in the subdomain $\Omega_{sub}=[\frac{1}{2}, 1]^M\subset\Omega$.

\section{Additional interpolation experiments}
We perform additional interpolation experiments on the DynaBench dataset, in which we compare the output of the Fourier Layer with a lower number of observation points (225 and 484) to the full resolution simulation.
The results are shown in~\Cref{fig:interpolation_results_low} and~\Cref{fig:interpolation_results_medium}.

\begin{figure}[ht]
\centering
\includegraphics[width=0.8\textwidth]{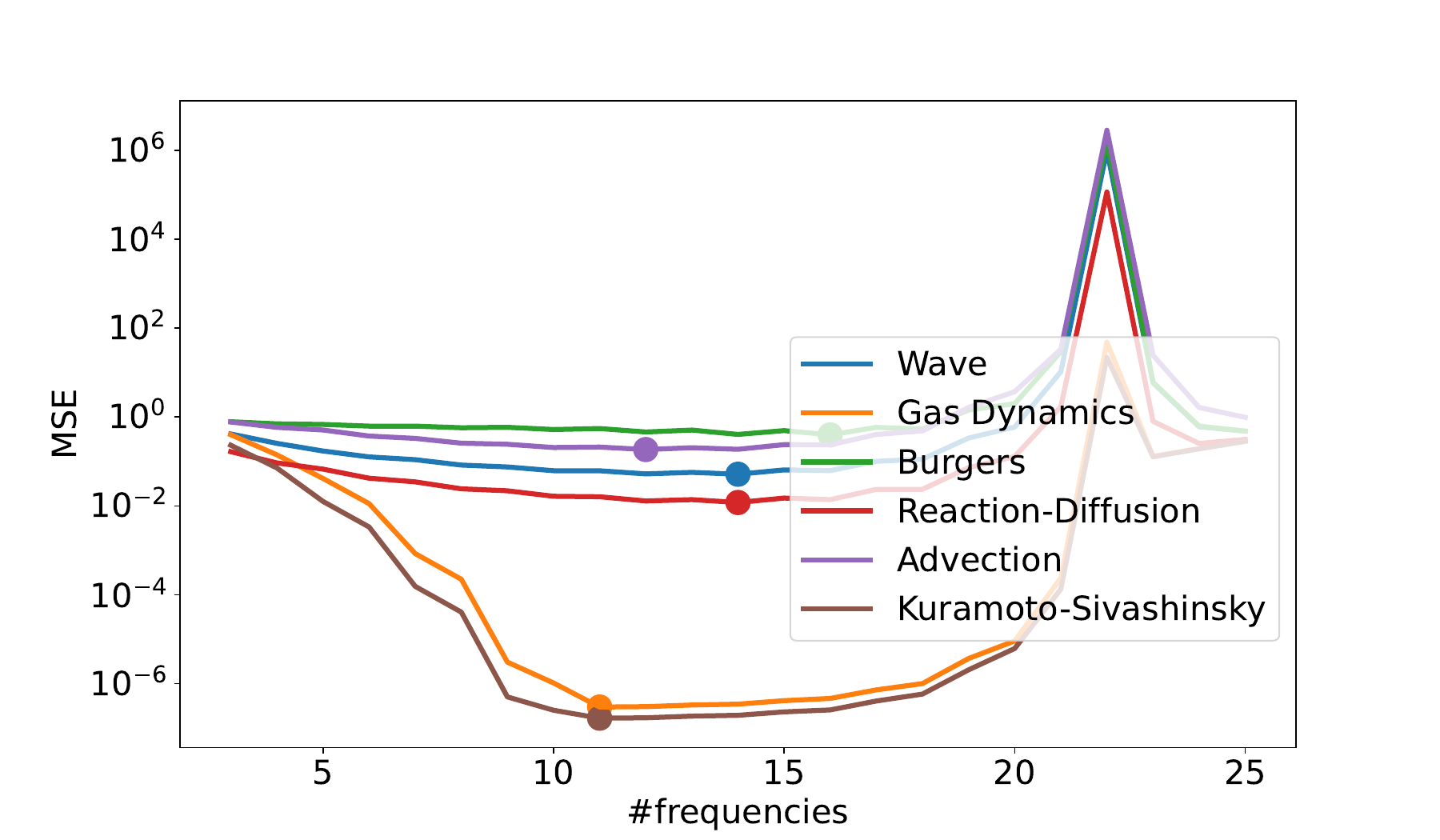}
\caption{Results of the interpolation experiment on the \textit{medium} version of the DynaBench dataset. The plot shows the Mean Squared Error between the interpolation given by our Fourier Interpolation Layer and the full resolution dataset as a function of the number of Fourier frequencies used. The lowest interpolation error for each dataset has been marked with a dot.}
\label{fig:interpolation_results_medium}
\end{figure}

\begin{figure}[ht]
\centering
\includegraphics[width=0.8\textwidth]{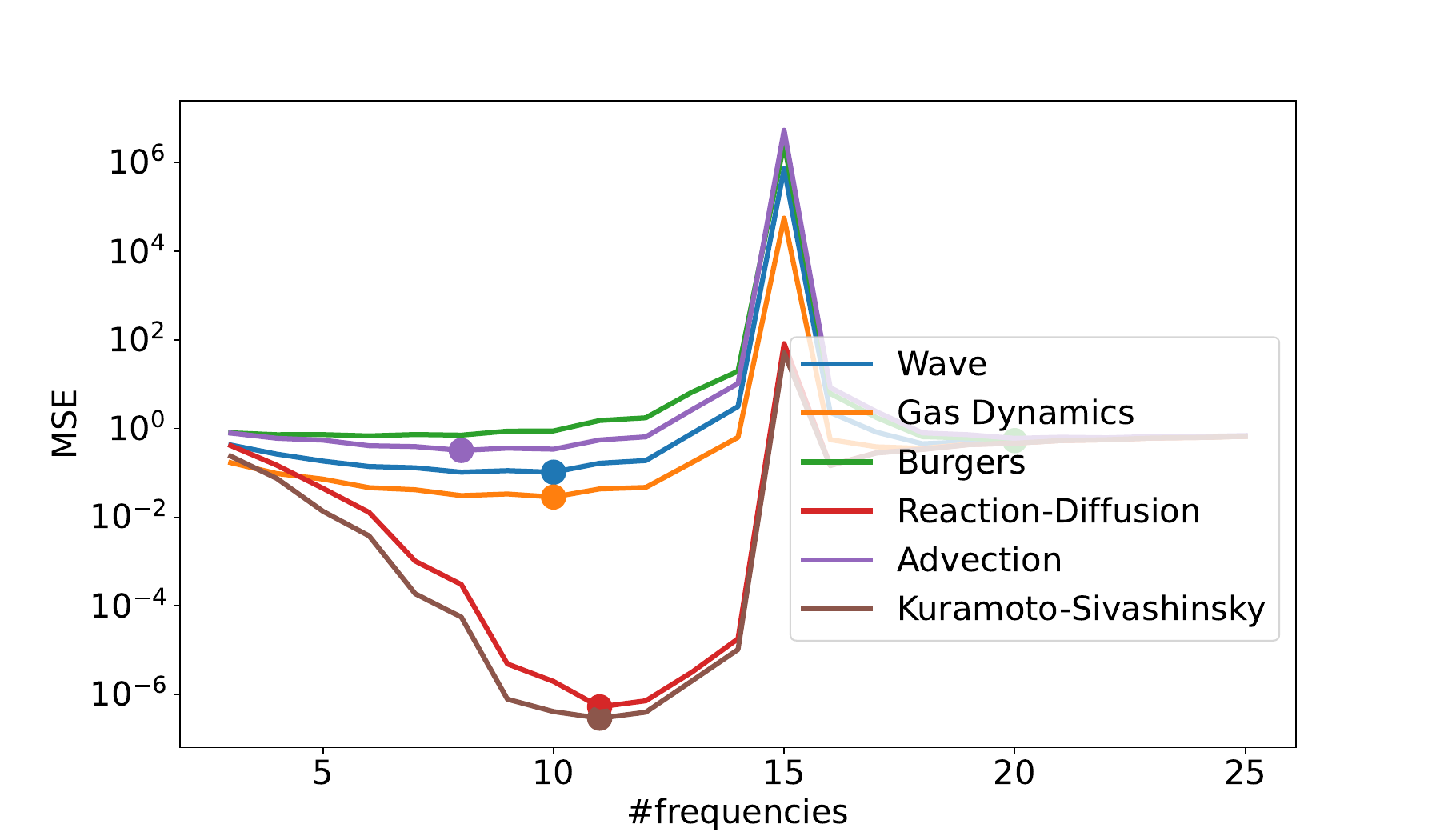}
\caption{Results of the interpolation experiment on the \textit{low} version of the DynaBench dataset. The plot shows the Mean Squared Error between the interpolation given by our Fourier Interpolation Layer and the full resolution dataset as a function of the number of Fourier frequencies used. The lowest interpolation error for each dataset has been marked with a dot.}
\label{fig:interpolation_results_low}
\end{figure}

Overall the interpolation results for the lower resolution data show that the optimal number of frequencies correlates with the number of observation points.
This optimum lies in the interval $[11, 16]$ for the medium resolution version and in the interval $[8, 11]$ for the low resolution version, depending on the type of physical system.
When using the Fourier Interpolation Layer for a given number $n$ of observation points this corresponds to the optimal number of frequencies of around $0.6\sqrt{n}$.

%
%
%
